\documentclass[10pt]{article}

\usepackage{fullpage}
\usepackage{amsmath} 
\usepackage{amsthm}
\usepackage{mathtools}
\usepackage{graphicx}
\mathtoolsset{showonlyrefs=true}

\newtheorem{theorem}{Theorem}
\newtheorem{proposition}{Proposition}

\newcommand\spn[1]{{\left\|#1\right\|_{sp}}}

\def\G{{\cal G}}

\title{On the Use of Non-Stationary Policies for \\ Infinite-Horizon Discounted Markov Decision Processes}

\author{Bruno Scherrer}

\begin{document}

\maketitle

\begin{abstract}
We consider infinite-horizon $\gamma$-discounted Markov Decision Processes, for which it is known that there exists a stationary optimal policy. We consider the algorithm Value Iteration and the sequence of policies $\pi_1,\dots,\pi_k$ it implicitely generates until some iteration $k$. We provide performance bounds for non-stationary policies involving the last $m$ generated policies that reduce the state-of-the-art bound for the last stationary policy $\pi_k$ by a factor $\frac{1-\gamma}{1-\gamma^m}$. In particular, the use of non-stationary policies allows to reduce the usual asymptotic performance bounds of Value Iteration with errors bounded by $\epsilon$ at each iteration from $\frac{\gamma}{(1-\gamma)^2}\epsilon$ to $\frac{\gamma}{1-\gamma}\epsilon$, which is significant in the usual situation when $\gamma$ is close to $1$. Given Bellman operators that can only be computed with some error $\epsilon$, a surprising consequence of this result is that the problem of ``computing an approximately optimal non-stationary policy'' is much simpler than that of ``computing an approximately optimal stationary policy'', and even slightly simpler than that of ``approximately computing the value of some fixed policy'', since this last problem \emph{only} has a guarantee of $\frac{1}{1-\gamma}\epsilon$.
\end{abstract}

\vspace{1cm}
Given a Markov Decision Process, suppose on runs an approximate version of Value Iteration, that is one builds a sequence of value-policy pairs as follows:
\begin{align}
&\mbox{Pick any } \pi_{k+1} \mbox{ in } \G v_k\\
&v_{k+1}= T_{\pi_{k+1}} v_k + \epsilon_{k+1}
\end{align}
where $v_0$ is arbitrary, $\G v_k$ is the set of policies that are greedy\footnote{There may be several greedy policies with respect to some value $v$, and what we write here holds whichever one is picked.} with respect to $v_k$,  and $T_{\pi_k}$ is the linear Bellman operator associated to policy $\pi_k$.
Though it does not appear exactly in this form in the literature, the following performance bound is somewhat standard. 
\begin{theorem}
\label{refbound}
Let $\epsilon=\max_{1 \le j < k}\spn{\epsilon_j}$ be a uniform upper bound on the span seminorm\footnote{For any function $f$ defined on the state space, the span seminorm of $f$ is $\spn{f}=\max_s f(s)-\min_s f(s)$. The motivation for using the span seminorm instead of a more usual $L_\infty$-norm is twofold: 1) it slightly improves on the state-of-the-art bounds and 2) it simplifies the construction of an example in the proof of the forthcoming Proposition~\ref{tightbound}.} of the errors before iteration $k$.
The loss of policy $\pi_k$ is bounded as follows:
\begin{align}
\|v_*-v_{\pi_k}\|_\infty \le \frac{1}{1-\gamma} \left( \frac{\gamma-\gamma^k}{1-\gamma}\epsilon + \gamma^k \spn{v_*-v_0} \right) \label{bound1}.
\end{align}
\end{theorem}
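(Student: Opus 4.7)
The plan is to split the argument into two pieces: a policy-improvement step $\|v_*-v_{\pi_k}\|_\infty\le\frac{\gamma}{1-\gamma}\spn{v_*-v_{k-1}}$, and a span-seminorm value-iteration recurrence $\spn{v_*-v_j}\le\gamma\spn{v_*-v_{j-1}}+\spn{\epsilon_j}$. Combining them and summing a geometric series will produce \eqref{bound1}.

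For the policy-improvement step, I would start from the identity $v_*-v_{\pi_k}=(I-\gamma P_{\pi_k})^{-1}(v_*-T_{\pi_k}v_*)$, which holds because $v_{\pi_k}$ is the fixed point of $T_{\pi_k}$. Using $T_{\pi_k}v_{k-1}=Tv_{k-1}$ (since $\pi_k\in\G v_{k-1}$), the linearity $T_{\pi_k}v_*-T_{\pi_k}v_{k-1}=\gamma P_{\pi_k}(v_*-v_{k-1})$, the identity $Tv_*=T_{\pi_*}v_*$ for any optimal $\pi_*$, and the inequality $Tv_{k-1}\ge T_{\pi_*}v_{k-1}$, a direct computation yields $0\le v_*-T_{\pi_k}v_*\le\gamma(P_{\pi_*}-P_{\pi_k})(v_*-v_{k-1})$. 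The key observation, which replaces the factor-of-two loss of a naive $\|\cdot\|_\infty$ analysis, is that each row of $P_{\pi_*}$ and of $P_{\pi_k}$ is a probability distribution, so $P_{\pi_*}(s,\cdot)x\le\max x$ while $P_{\pi_k}(s,\cdot)x\ge\min x$; componentwise this gives $(P_{\pi_*}-P_{\pi_k})(v_*-v_{k-1})\le\spn{v_*-v_{k-1}}\mathbf{1}$. Applying the non-negative operator $(I-\gamma P_{\pi_k})^{-1}=\sum_{n\ge0}\gamma^n P_{\pi_k}^n$, whose row sums are $\frac{1}{1-\gamma}$, then yields the claimed policy-improvement bound.

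For the value-iteration recurrence, since $\pi_j\in\G v_{j-1}$ gives $v_j=Tv_{j-1}+\epsilon_j$ while $v_*=Tv_*$, subadditivity of $\spn{\cdot}$ reduces the matter to showing that $T$ is a $\gamma$-span-contraction. This follows because pointwise $(Tv-Tw)(s)$ is sandwiched between $\gamma P_{\pi_w}(s,\cdot)(v-w)$ and $\gamma P_{\pi_v}(s,\cdot)(v-w)$ (with $\pi_v,\pi_w$ greedy for $v,w$), each a convex combination of entries of $v-w$, hence in $[\gamma\min(v-w),\gamma\max(v-w)]$. Unrolling $\spn{v_*-v_j}\le\gamma\spn{v_*-v_{j-1}}+\spn{\epsilon_j}$ from $j=1$ to $k-1$ with $\spn{\epsilon_j}\le\epsilon$ yields $\spn{v_*-v_{k-1}}\le\gamma^{k-1}\spn{v_*-v_0}+\frac{1-\gamma^{k-1}}{1-\gamma}\epsilon$, and substituting into the policy-improvement bound gives exactly \eqref{bound1}. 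The only genuinely non-routine piece is the probability-row / shift-invariance observation in the policy-improvement step; everything else is standard iteration and geometric summation.
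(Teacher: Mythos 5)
Your proof is correct, but it takes a genuinely different route from the paper's. The paper does not prove Theorem~\ref{refbound} directly: it obtains it as the $m=1$ case of Theorem~\ref{main}, whose proof (i) passes through the triangle inequality $\|v_*-v_{\pi_{k,m}}\|_\infty\le\|T_{\pi_*}v_*-T_{\pi_k}v_{k-1}\|_\infty+\|T_{\pi_k}v_{k-1}-v_{\pi_{k,m}}\|_\infty$, (ii) derives the self-referential inequality $\|v_*-v_{\pi_{k,m}}\|_\infty\le\gamma^m\|v_*-v_{\pi_{k,m}}\|_\infty+2\gamma^k\|v_*-v_0\|_\infty+\frac{2(\gamma-\gamma^k)}{1-\gamma}\epsilon_\infty$ and solves it, and (iii) only at the very end converts $2\|\cdot\|_\infty$ into $\spn{\cdot}$ by noting that each $v_j$ may be shifted by a constant without changing $\G v_j$, so one may assume $\spn{f}=2\|f\|_\infty$ for the relevant quantities. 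You instead keep the span seminorm throughout: the fixed-point identity $v_*-v_{\pi_k}=(I-\gamma P_{\pi_k})^{-1}(v_*-T_{\pi_k}v_*)$ combined with the two-sided componentwise bound $0\le v_*-T_{\pi_k}v_*\le\gamma(P_{\pi_*}-P_{\pi_k})(v_*-v_{k-1})\le\gamma\spn{v_*-v_{k-1}}\mathbf{1}$ gives the policy-improvement step with no factor of $2$ to remove, and the span-contraction of the Bellman optimality operator handles the iteration; the final substitution reproduces \eqref{bound1} exactly. Both arguments are sound. Yours is more self-contained and makes it transparent \emph{why} the span (rather than $2\|\cdot\|_\infty$) is the right quantity; the paper's has the advantage that the identical computation covers all $m$ simultaneously and so delivers the non-stationary bound of Theorem~\ref{main}, whereas your fixed-point identity would have to be redone with the $m$-step operator $T_{\pi_k}\cdots T_{\pi_{k-m+1}}$ and the product matrix $\Gamma_{k,m}$ to recover that generalization.
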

In Theorem~\ref{main}, we will prove a generalization of this result, so we do not provide a proof here.
Since for any $f$,  $\spn{f} \le 2 \|f\|_\infty$, Theorem~\ref{refbound} constitutes a slight improvement and a (finite-iteration) generalization of the following well-known performance bound (see \cite{ndp}):
\begin{align}
\lim\sup_{k \rightarrow \infty} \|v_*-v_{\pi_k}\|_\infty \le \frac{2\gamma}{(1-\gamma)^2} \max_k \|\epsilon_k\|_\infty.
\end{align}
Asymptotically, the above bounds involve a
$\frac{\gamma}{(1-\gamma)^2}$ constant that may be really big when
$\gamma$ is close to~1. Compared to a value-iteration algorithm for
approximately computing the value of some fixed policy, and for which
one can prove a dependency of the form $\frac{1}{1-\gamma}\epsilon$,
there is an extra term $\frac{\gamma}{1-\gamma}$ that suggests that
the problem of ``computing an approximately optimal policy'' is
significantly harder than that of ``approximately computing the
  value of some fixed policy''. To our knowledge, there does not exist
any example in the literature that supports the tightness of the above
mentionned bounds. The following proposition shows that the bound of
Theorem~\ref{refbound} is in fact tight.
\begin{proposition}
\label{tightbound}
For all $\epsilon \ge 0$, $\Delta \ge 0$, and $k>0$, there exists a $k+1$-state MDP, an initial value $v_0$ such that $\spn{v_*-v_0}=\Delta$, a sequence of noise terms $(\epsilon_j)$ with $\spn{\epsilon_j} \le \epsilon$, such that running Value Iteration during iterations with errors $(\epsilon_j)$ outputs a value function $v_{k-1}$ of which a greedy policy $\pi_k$ satisfies Equation~\eqref{bound1} with equality.
\end{proposition}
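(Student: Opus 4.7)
The plan is to construct a $(k+1)$-state MDP together with an initial value $v_0$ and an error sequence $(\epsilon_j)$ that saturate, step by step, every inequality used in the proof of Theorem~\ref{refbound}. I would proceed in three stages.

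First, the MDP: take $k+1$ states, one of which, call it $s^*$, is ``critical'' and has two actions -- a ``bad'' self-loop at $s^*$ with reward $r_b$ and a ``good'' transition to another designated state $u$ with reward $r_g$. The remaining $k-1$ states, together with $u$, are arranged so that $v_*(u)=0$ (for instance, $u$ absorbing with reward $0$ and the others a deterministic chain feeding into $u$), which supports the iterative dynamics. Choosing $r_b<(1-\gamma)r_g$ makes the good action optimal at $s^*$, so $v_*(s^*)=r_g$, while any policy that picks the bad action at $s^*$ has value $r_b/(1-\gamma)$ there, yielding loss $L/(1-\gamma)$ with $L:=(1-\gamma)r_g-r_b$. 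The $1/(1-\gamma)$ factor is produced by the self-loop and matches the denominator of the bound.

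Second, I would set $v_0$ equal to $v_*$ everywhere except at the two states $s^*$ and $u$, where $v_0(s^*)=v_*(s^*)+\Delta/2$ and $v_0(u)=v_*(u)-\Delta/2$, so $\spn{v_*-v_0}=\Delta$; and I would take each $\epsilon_j$ supported on the same pair, with $\epsilon_j(s^*)=+\epsilon/2$ and $\epsilon_j(u)=-\epsilon/2$, giving $\spn{\epsilon_j}=\epsilon$. The chain is designed so that, iteration after iteration, the extremal pair of $v_*-v_j$ remains $\{s^*,u\}$ and each new error adds its signed contribution in the aligned direction. A direct computation along the chain then yields $\spn{v_*-v_{k-1}}=\gamma^{k-1}\Delta+\frac{1-\gamma^{k-1}}{1-\gamma}\epsilon$, saturating the recurrence $\spn{v_*-v_j}\le\gamma\,\spn{v_*-v_{j-1}}+\spn{\epsilon_j}$ that underlies Theorem~\ref{refbound}.

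Third, I would tune $r_g,r_b$ (equivalently $L$) so that the greedy condition at $s^*$ for selecting the bad action, namely $v_{k-1}(s^*)-v_{k-1}(u)\ge (r_g-r_b)/\gamma$, holds with equality at iteration $k$. The loss of the resulting $\pi_k$ is then $\gamma/(1-\gamma)$ times $\spn{v_*-v_{k-1}}$ (since $\pi_k$ self-loops forever at $s^*$ and $u$ is the target of the optimal action, saturating the greedy-step amplification), and this equals $\frac{1}{1-\gamma}\big(\frac{\gamma-\gamma^k}{1-\gamma}\epsilon+\gamma^k\Delta\big)$, the right-hand side of \eqref{bound1}.

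The main obstacle is arranging the chain geometry so that all the saturations hold \emph{simultaneously}: the extremal pair of $v_*-v_j$ must stay on $\{s^*,u\}$ at every $j$, the greedy action at $s^*$ with respect to $v_j$ must be the bad one throughout for the prescribed $v_j(s^*)$ formula to be valid, and at $j=k-1$ the threshold $(r_g-r_b)/\gamma$ must coincide exactly with $v_{k-1}(s^*)-v_{k-1}(u)$. The use of the span seminorm (rather than $\|\cdot\|_\infty$) is essential here: splitting the noise as $\pm\epsilon/2$ across two states preserves $\spn{\epsilon_j}=\epsilon$ while moving $v_j$ upward on one extremal coordinate and downward on the other without wasting ``budget'' -- exactly the simplification flagged in the footnote of Theorem~\ref{refbound}. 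Carrying out this coordinated saturation along the chain is the heart of the proof.
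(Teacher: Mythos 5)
There is a genuine gap, and it sits exactly at the step you flag as ``the heart of the proof'': with the errors supported on a \emph{fixed} pair $\{s^*,u\}$, the claimed saturation $\spn{v_*-v_{k-1}}=\gamma^{k-1}\Delta+\frac{1-\gamma^{k-1}}{1-\gamma}\epsilon$, with the maximum of $v_{k-1}-v_*$ at $s^*$ and the minimum at $u$, is not achievable. The negative half does accumulate at the absorbing state $u$ (its backup is a pure self-loop), but the positive half cannot accumulate at the decision state $s^*$, because the backup there is $\max\bigl(r_b+\gamma v_j(s^*),\,r_g+\gamma v_j(u)\bigr)$. If the good action is greedy at step $j$ then $v_{j+1}(s^*)-v_*(s^*)=\gamma\bigl(v_j(u)-v_*(u)\bigr)+\epsilon/2$, i.e.\ $s^*$ simply tracks the \emph{negative} drift at $u$ and the difference $(v_j-v_*)(s^*)-(v_j-v_*)(u)$ freezes at $\epsilon$; the tie condition then forces $L=\gamma\epsilon$ and a loss of only $\frac{\gamma}{1-\gamma}\epsilon$. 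If instead the bad action is greedy at the intermediate steps, then $v_{j+1}(s^*)-v_*(s^*)=\gamma\bigl(v_j(s^*)-v_*(s^*)\bigr)-L+\epsilon/2$, the gap $D_j=(v_j-v_*)(s^*)-(v_j-v_*)(u)$ obeys $D_{j+1}=\gamma D_j+\epsilon-L$, and imposing the tie $\gamma D_{k-1}=L$ yields $L=\frac{1-\gamma}{1-\gamma^k}\bigl(\frac{\gamma-\gamma^k}{1-\gamma}\epsilon+\gamma^k\Delta\bigr)$, hence a loss of $\frac{1}{1-\gamma^k}\bigl(\frac{\gamma-\gamma^k}{1-\gamma}\epsilon+\gamma^k\Delta\bigr)$ --- strictly below the right-hand side of \eqref{bound1} for $k\ge 2$ (amusingly, it is the non-stationary bound of Theorem~\ref{main}, not the stationary one you are trying to saturate). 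In short: the two requirements ``$u$ is underestimated by the full accumulated amount'' and ``the decision state does not itself reflect that underestimation until the final greedy step'' are incompatible when the decision state reads $v_j(u)$ directly at every iteration.

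The missing idea is to make the error \emph{travel}. The paper's MDP is a chain $s_k\to s_{k-1}\to\cdots\to s_0$ with zero rewards (so $v_*\equiv 0$), the only choice being at $s_k$: self-loop with reward $r=-\frac{\gamma-\gamma^k}{1-\gamma}\epsilon-\gamma^k\Delta$ or move to $s_{k-1}$ for free. The initial offset $-\Delta$ is placed at the far end $s_0$, and the iteration-$j$ error $-\epsilon$ is placed at state $s_j$, i.e.\ at a \emph{different} state each iteration, synchronized with the wavefront of value propagation. One then gets $v_j(s_j)=-\frac{1-\gamma^j}{1-\gamma}\epsilon-\gamma^j\Delta$ while $v_j(s_l)=0$ for all $l>j$: the accumulated underestimation surfaces at $s_{k-1}$ only at iteration $k-1$, the decision state keeps $v_j(s_k)=0=v_*(s_k)$ throughout, and at the greedy step $\gamma v_{k-1}(s_{k-1})=r$ exactly ties the self-loop, whose selection costs $\frac{-r}{1-\gamma}$, the right-hand side of \eqref{bound1}. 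Note also that the symmetric $\pm\epsilon/2$ split is unnecessary: the paper's error is one-sided ($-\epsilon$ at one state, $0$ elsewhere) and still has span $\epsilon$, which is all the span seminorm requires. Your meta-strategy (saturate each inequality of Theorem~\ref{refbound} simultaneously) is the right one, but the fixed-support construction cannot carry it out; you need the per-iteration relocation of the error along the chain.
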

\begin{figure}
\begin{center}
\includegraphics[width=0.8\textwidth]{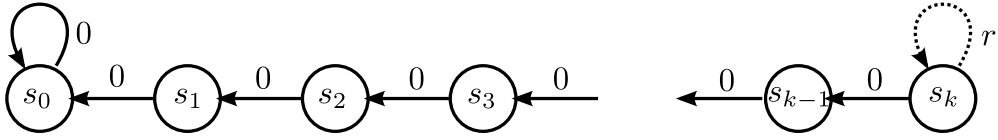}
\end{center}
\caption{\label{fig:mdp}The deterministic MDP used in the proof of Proposition~\ref{tightbound}}
\end{figure}
\begin{proof}
Consider the deterministic MDP of Figure~\ref{fig:mdp}. The only decision is in state $s_k$, where one can stay with reward $r=-\frac{\gamma-\gamma^k}{1-\gamma}\epsilon-\gamma^{k} \Delta$ or move to $s_{k-1}$ with $0$ reward. All other transitions give $0$ reward. Thus, there are only two policies, the optimal policy $\pi_*$ with value equal to $0$, and a policy $\bar \pi$ for which the value in $s_k$ is $\frac{r}{1-\gamma}$. Take
\begin{equation}
v_0(s_l)=\left\{
\begin{array}{ll}
-\Delta & \mbox{if }l=0 \\
0 & \mbox{else}
\end{array}
\right.
\mbox{~~~and for all $j<k$,~~}
\epsilon_j(s_l)=\left\{
\begin{array}{ll}
-\epsilon & \mbox{if }j=l \\
0 & \mbox{else.}
\end{array}
\right.
\end{equation}
By induction, it can be seen that for all $j \in \{1,2, \dots, k-1\}$,
\begin{align}
v_j(s_l) = \left\{
\begin{array}{ll}
-\epsilon-\gamma\epsilon-\dots-\gamma^{j-1}\epsilon-\gamma^j \Delta = -\frac{1-\gamma^{j}}{1-\gamma}\epsilon-\gamma^j \Delta & \mbox{if }j=l \\
0 & \mbox{if }j<l \le k 
\end{array} 
\right. 
\end{align}
Since $\gamma v_{k-1}(s_{k-1})=r$ and $v_{k-1}(s_k)=0$, both policies are greedy with respect to $v_{k-1}$, and the bound of Equation~\eqref{bound1} holds with equality for $\bar \pi$.
\end{proof}

Instead of running the last stationary policy $\pi_k$, one may consider running a periodic non-stationary policy, which is made of the last $m$ policies. The following theorem shows that it is indeed a good idea.
\begin{theorem} 
\label{main}
Let $\pi_{k,m}$ be the following policy
\begin{align}
\pi_{k,m} = \pi_k ~ \pi_{k-1} ~ \cdots ~ \pi_{k-m+1} ~ \pi_k ~ \pi_{k-1} ~ \cdots.
\end{align}
Then its performance loss is bounded as follows:
\begin{align}
\|v_*-v_{\pi_{k,m}}\|_\infty \le \frac{1}{1-\gamma^m} \left( \frac{\gamma-\gamma^k}{1-\gamma} \epsilon + \gamma^k \spn{v_*-v_0} \right).
\end{align}
\end{theorem}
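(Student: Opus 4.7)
My plan follows the template of the standard analysis for the stationary case ($m=1$) of Theorem~\ref{refbound}, upgraded to exploit the $m$-step contraction inherent to the cyclic non-stationary policy. Set $T_{(k,m)} := T_{\pi_k} T_{\pi_{k-1}} \cdots T_{\pi_{k-m+1}}$, let $P_\pi$ denote the stochastic transition matrix attached to $T_\pi$ (so $T_\pi v = r_\pi + \gamma P_\pi v$), and let $\Pi_{(k,m)} := P_{\pi_k} P_{\pi_{k-1}} \cdots P_{\pi_{k-m+1}}$. Since $\pi_{k,m}$ is periodic with period $m$, its value $v_{\pi_{k,m}}$ is the unique fixed point of $T_{(k,m)}$, which is affine with linear part $\gamma^m \Pi_{(k,m)}$. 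Subtracting the fixed-point identity from $v_*$ yields
\begin{align}
v_* - v_{\pi_{k,m}} = (I - \gamma^m \Pi_{(k,m)})^{-1}(v_* - T_{(k,m)} v_*),
\end{align}
which immediately produces the outer $\frac{1}{1-\gamma^m}$ factor. It therefore suffices to bound $\|v_* - T_{(k,m)} v_*\|_\infty$ by the quantity inside the parentheses of the theorem.

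Iterating the noisy recurrence $v_{j+1} = T_{\pi_{j+1}} v_j + \epsilon_{j+1}$ through the last $m$ policies gives $T_{(k,m)} v_{k-m} = v_k - E$, where $E := \sum_{i=0}^{m-1} \gamma^i (P_{\pi_k}\cdots P_{\pi_{k-i+1}})\epsilon_{k-i}$ (with empty-product convention for $i=0$). Using the affinity of $T_{(k,m)}$ and writing $b_j := v_* - v_j$, this yields
\begin{align}
v_* - T_{(k,m)} v_* = b_k + E - \gamma^m \Pi_{(k,m)} b_{k-m}.
\end{align}

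The key technical step is to control this expression with the span seminorm rather than $L_\infty$. A direct telescoping $v_* - T_{(k,m)} v_* = \sum_{i=0}^{m-1} \gamma^i(P_{\pi_k}\cdots P_{\pi_{k-i+1}})(v_* - T_{\pi_{k-i}} v_*)$ followed by per-term bounds yields $\sum_{i=0}^{m-1}\gamma^{i+1}\spn{b_{k-i-1}}$, which overshoots by a factor $m$ on the $\spn{v_*-v_0}$ term. Instead, I would substitute the iterated upper bound $b_k \le \gamma^m (P_{\pi_*})^m b_{k-m} - \sum_{i=0}^{m-1}\gamma^i (P_{\pi_*})^i \epsilon_{k-i}$, obtained from $m$ applications of $b_{j+1} \le \gamma P_{\pi_*} b_j - \epsilon_{j+1}$ (where $\pi_*$ is the optimal policy), into the previous display. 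The $i=0$ error cancels exactly and the expression collapses into
\begin{align}
\gamma^m\bigl[(P_{\pi_*})^m - \Pi_{(k,m)}\bigr]b_{k-m} \;+\; \sum_{i=1}^{m-1} \gamma^i\bigl[(P_{\pi_k}\cdots P_{\pi_{k-i+1}}) - (P_{\pi_*})^i\bigr]\epsilon_{k-i},
\end{align}
in which every bracket is a difference of two stochastic matrices, has row sums zero, and therefore maps any vector $f$ to one of $L_\infty$-norm at most $\spn{f}$.

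To close, the span-contraction of the optimal Bellman operator ($\spn{Tu - Tv} \le \gamma \spn{u-v}$) combined with $\spn{\epsilon_j} \le \epsilon$ gives $\spn{b_{j+1}} \le \gamma \spn{b_j} + \spn{\epsilon_{j+1}}$ and hence $\spn{b_{k-m}} \le \gamma^{k-m}\spn{v_*-v_0} + \frac{1-\gamma^{k-m}}{1-\gamma}\epsilon$. A short algebraic manipulation shows that the previous display is then bounded in $L_\infty$ by
\begin{align}
\gamma^m\spn{b_{k-m}} + \tfrac{\gamma - \gamma^m}{1-\gamma}\epsilon \;\le\; \gamma^k \spn{v_*-v_0} + \tfrac{\gamma - \gamma^k}{1-\gamma}\epsilon,
\end{align}
and dividing by $1-\gamma^m$ yields the theorem; setting $m=1$ also recovers Theorem~\ref{refbound} as promised. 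The main obstacle is precisely locating the substitution that turns every coefficient matrix into a difference of stochastic matrices so the span seminorm can replace $L_\infty$; once this is set up, the rest of the argument is routine bookkeeping.
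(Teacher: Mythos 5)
Your proof is correct, and it reaches the bound by a genuinely different route than the paper's. The paper never inverts $(I-\gamma^m\Pi_{(k,m)})$ explicitly: it bounds $\|T_{\pi_*}v_*-T_{\pi_k}v_{k-1}\|_\infty$ and $\|T_{\pi_k}v_{k-1}-v_{\pi_{k,m}}\|_\infty$ separately, arrives at the self-referential inequality $\|v_*-v_{\pi_{k,m}}\|_\infty\le\gamma^m\|v_*-v_{\pi_{k,m}}\|_\infty+2\gamma^k\|v_*-v_0\|_\infty+\frac{2(\gamma-\gamma^k)}{1-\gamma}\epsilon_\infty$, solves it for the norm, and only at the very end converts $2\|\cdot\|_\infty$ into $\spn{\cdot}$ by observing that each $v_j$ and $\epsilon_j$ may be shifted by a constant without changing the greedy policies. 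You instead apply the resolvent $(I-\gamma^m\Pi_{(k,m)})^{-1}$ to the residual $v_*-T_{(k,m)}v_*$ and, via the substitution $b_k\le\gamma^m(P_{\pi_*})^m b_{k-m}-\sum_{i=0}^{m-1}\gamma^i(P_{\pi_*})^i\epsilon_{k-i}$, arrange for every coefficient to be a difference of stochastic matrices, so the span seminorm enters structurally (zero row sums) rather than through a normalization trick; the exact cancellation of the $i=0$ term also makes transparent why $\epsilon_k$ never appears, which the paper handles by working with $T_{\pi_k}v_{k-1}$ rather than $v_k$. Both proofs rest on the same pillars --- the $m$-step fixed point of the periodic policy, the iterated noisy recurrence, and the greedy inequality against $\pi_*$ --- but yours is finer-grained componentwise algebra while the paper's is shorter norm bookkeeping; the paper's intermediate step additionally yields the side observation that $T_{\pi_k}v_{k-1}$ approximates $v_{\pi_{k,m}}$ to within $\frac{\gamma}{1-\gamma}\epsilon$, which your route does not produce. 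One small point to make explicit: your substitution gives only a componentwise \emph{upper} bound on $v_*-T_{(k,m)}v_*$, so to pass to the $L_\infty$ norm you should note that $v_*-T_{(k,m)}v_*\ge 0$ (or directly that $v_*\ge v_{\pi_{k,m}}$), after which the one-sided bound is all you need.
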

When $m=1$, one exactly recovers the result of Theorem~\ref{refbound}. For general $m$, this new bound is a factor $\frac{1-\gamma}{1-\gamma^m}$ better than the usual bound. Taking $m=k$, that is considering all the policies generated from the very start, one obtains  the following bound:
\begin{align}
\|v_*-v_{\pi_{k,k}}\|_\infty \le \left( \frac{\gamma}{1-\gamma}-\frac{\gamma^k}{1-\gamma^k} \right) \epsilon + \frac{\gamma^k}{1-\gamma^k} \spn{v_*-v_0}.
\end{align}
that tends to $\frac{\gamma}{1-\gamma}\epsilon$ when $k$ tends to $\infty$.
In other words, we can see here that the problem of ``computing a (non stationary) approximately-optimal policy'' is not harder than that of ``computing approximately the value of some fixed policy''. Since the respective asymptotic errors are $\frac{\gamma}{1-\gamma}\epsilon$ and $\frac{1}{1-\gamma}\epsilon$, it seems even simpler~!

\begin{proof}[Proof of Theorem~\ref{main}]
The value of $\pi_{k,m}$ satisfies:
\begin{align}
\label{eqa}
v_{\pi_{k,m}} = T_{\pi_k} T_{\pi_{k-1}} \cdots T_{\pi_{k-m+1}} v_{\pi_{k,m}}.
\end{align}
By induction, it can be shown that the sequence of values generated by the algorithm satisfies:
\begin{align}
\label{eqb}
T_{\pi_k} v_{k-1} = T_{\pi_{k}} T_{\pi_{k-1}} \cdots T_{\pi_{k-m+1}} v_{k-m} + \sum_{i=1}^{m-1} \Gamma_{k,i} \epsilon_{k-i}
\end{align}
where
$$
\Gamma_{k,i}=P_{\pi_k} P_{\pi_{k-1}} \cdots P_{\pi_{k-i+1}}
$$
in which, for all $\pi$, $P_\pi$ denotes the stochastic matrix associated to policy $\pi$.
By substracting Equations~\eqref{eqb} and \eqref{eqa}, one obtains:
\begin{align}
T_{\pi_k}v_{k-1} - v_{\pi_{k,m}} &= \Gamma_{k,m} (v_{k-m}-v_{\pi_{k,m}}) + \sum_{i=1}^{m-1} \Gamma_{k,i} \epsilon_{k-i}
\end{align}
and by taking the norm
\begin{align}
\|T_{\pi_k}v_{k-1}-v_{\pi_{k,m}}\|_\infty &= \gamma^m \|v_{k-m}- v_{\pi_{k,m}} \|_\infty + \frac{\gamma-\gamma^m}{1-\gamma} \epsilon_\infty \label{eq0}
\end{align}
where $\epsilon_\infty=\max_{1 \le j \le k} \|\epsilon_j\|_\infty$.
Essentially, Equation~\eqref{eq0} shows that for sufficiently big $m$, $T_{\pi_k}v_{k-1}$ is an $\frac{\gamma}{1-\gamma}\epsilon$ approximation of the value of the non-stationary policy $\pi_{k,m}$ (whereas in general, it may be a much poorer approximation of the value of the stationary policy $\pi_k$.

By induction, it can also be proved that
\begin{align}
\|v_*-v_k\|_\infty \le \gamma^k \|v_*-v_0\|_\infty + \frac{1-\gamma^k}{1-\gamma} \epsilon_\infty. \label{eq1}
\end{align}
Using the fact that $\|T_{\pi_*}v_* - T_{\pi_k}v_{k-1} \|_\infty\le \gamma \| v_*-v_{k-1}\|_\infty $ since $\pi_*$ (resp. $\pi_k$) is greedy with respect to $v_*$ (resp. $v_{k-1}$), as well as Equations~\eqref{eq0} and~\eqref{eq1}, we can conclude by observing that
\begin{align}
\|v_* - v_{\pi_{k,m}} \|_\infty &\le \|T_{\pi_*}v_* - T_{\pi_k}v_{k-1} \|_\infty + \|T_{\pi_k}v_{k-1} - v_{\pi_{k,m}} \|_\infty \\
&  \le  \gamma \|v_*-v_{k-1}\|_\infty  + \gamma^m \|v_{k-m} - v_{\pi_{k,m}}  \|_\infty + \frac{\gamma-\gamma^m}{1-\gamma} \epsilon_\infty \\
& \le \gamma \left( \gamma^{k-1} \|v_*-v_0\|_\infty + \frac{1-\gamma^{k-1}}{1-\gamma} \epsilon_\infty\right) + \gamma^m \left( \| v_{k-m}-v_* \|_\infty  + \|v_* - v_{\pi_{k,m}} \|_\infty \right) + \frac{\gamma-\gamma^m}{1-\gamma} \epsilon_\infty \\
& \le \gamma^k \|v_*-v_0\|_\infty + \frac{\gamma-\gamma^k}{1-\gamma} \epsilon_\infty \\
& ~~~~~~~ +  \gamma^m \left( \gamma^{k-m}  \|v_*-v_0\|_\infty + \frac{1-\gamma^{k-m}}{1-\gamma} \epsilon_\infty + \|v_* - v_{\pi_{k,m}} \|_\infty  \right) + \frac{\gamma-\gamma^m}{1-\gamma} \epsilon_\infty \\
& = \gamma^m \|v_* - v_{\pi_{k,m}} \|_\infty + 2\gamma^k \|v_*-v_0\|_\infty + \frac{2(\gamma-\gamma^k)}{1-\gamma}\epsilon_\infty. 
\end{align}
Adding a constant to the value $v_j$ at any step $j$ of the algorithm does not affect the  greedy policy set $\G v_j$ and only adds a constant to the next value $v_{j+1}$. As a consequence, we can assume witout loss of generality that $\spn{v_*-v_0}=2\|v_*-v_0\|_\infty$, $\spn{\epsilon_j}=2\|\epsilon_j\|_\infty$ and the result follows.
\end{proof}

From a bibliographical point of view, the idea of using non-stationary policies to improve error bounds already appears in \cite{kakade}. However, in these works, the author considers \emph{finite-horizon} problems where the policy to be computed is naturally non-stationary. The fact that non-stationary policies (that loop over the last $m$ computed policies) can also be useful in an infinite horizon context is to our knowledge new.

\paragraph{Acknowledgements}
I thank Boris Lesner for pointing out a flaw in a previous temptative proof of Proposition~\ref{tightbound}.

\bibliographystyle{plain}
\bibliography{./biblio}

\begin{thebibliography}{1}

\bibitem{ndp}
D.P. Bertsekas and J.N. Tsitsiklis.
\newblock {\em Neuro-Dynamic Programming}.
\newblock Athena Scientific, 1996.

\bibitem{kakade}
S.M. Kakade.
\newblock {\em On the Sample Complexity of Reinforcement Learning}.
\newblock PhD thesis, University College London, 2003.

\end{thebibliography}

\end{document}